\documentclass[letterpaper,10pt,conference]{ieeeconf}
\IEEEoverridecommandlockouts
\usepackage{amsmath,amssymb}
\usepackage{bm}

\usepackage{amsthm}
\usepackage{cite}
\usepackage{graphicx}
\usepackage{url}
\usepackage{tikz}
\usepackage{tikz-qtree}
\usetikzlibrary{positioning,automata,backgrounds}
\usepackage{xcolor}
\usepackage{algorithm}
\usepackage[noend]{algpseudocode} 
\usepackage{subcaption} 
\usepackage{mathtools}
\usepackage{hyperref}
\usepackage{booktabs}
\usepackage{multirow}
\usepackage{bbm}
\usepackage{siunitx}
\usepackage{empheq} 
\usepackage{makecell}  

\theoremstyle{plain}
\newtheorem{theorem}{Theorem}
\newtheorem{lemma}{Lemma}

\newtheorem{definition}{Definition}
\newtheorem{assumption}{Assumption}

\newtheorem{problem}{Problem}
\newtheorem{remark}{Remark}

\newcommand{\xb}{\bm{x}}
\newcommand{\yb}{\bm{y}}
\newcommand{\zb}{\bm{z}}
\newcommand{\bb}{\bm{b}}
\newcommand{\ub}{\bm{u}}

\newcommand{\Ab}{\bm{A}}
\newcommand{\Bb}{\bm{B}}
\newcommand{\mub}{\bm{\mu}}
\newcommand{\Lambdab}{\bm{\Lambda}}
\newcommand{\Sigmab}{\bm{\Sigma}}
\newcommand{\Wb}{\bm{W}}
\newcommand{\Hb}{\bm{H}}
\newcommand{\Vb}{\bm{V}}
\newcommand{\Nb}{\bm{N}}
\newcommand{\Ib}{\bm{I}}

\newcommand{\omegab}{\bm{\omega}}
\newcommand{\nub}{\bm{\nu}}

\newcommand{\VaR}{\mathrm{VaR}}

\DeclareMathOperator{\erf}{erf}

\usepackage{xspace}
\newcommand{\envRA}{\texttt{reach-avoid}\xspace}
\newcommand{\envCM}{\texttt{cautious-motion}\xspace}

\newcommand{\until}{{\bf{U}}}
\newcommand{\always}{\square}
\newcommand{\eventually}{\lozenge}

\begin{document}

\title{Belief Space Planning for Risk Signal Temporal Logic \\ via Mixed-Integer Convex Programming}

\author{Shaohang Han, Joris Verhagen, Kaier Liang, and Jana Tumova
\thanks{This work was partially supported by the Wallenberg AI, Autonomous Systems and Software Program (WASP) funded by the Knut and Alice Wallenberg Foundation.}%
\thanks{The authors are with the Division of Robotics, Perception and Learning, School of Electrical Engineering and Computer Science, KTH Royal Institute of Technology, Stockholm, Sweden, and are also affiliated with Digital Futures. Email: \texttt{shaohang@kth.se}.}%
}

\maketitle
\thispagestyle{empty}

\begin{abstract}
Signal Temporal Logic (STL) specifies robot tasks with spatio-temporal requirements, and its robustness semantics allow planning by trajectory optimization. In practice, robots estimate their state from noisy measurements, and when the information gained from a measurement depends on the state, the robot can actively reduce its uncertainty by planning in the belief space. The resulting problem is, however, generally non-convex, for which gradient-based methods are neither complete nor globally optimal. In this paper, we consider Risk STL (RiSTL) planning in Gaussian belief space and show that an information-form parameterization of the belief, together with a convex relaxation of the belief dynamics, yields a mixed-integer convex program (MICP). We prove that the MICP is sound and complete, and that the relaxation preserves the globally optimal input trajectories. We validate the effectiveness of our method in numerical simulations.
\end{abstract}

\section{Introduction}
Complex robot tasks with spatio-temporal requirements can be specified in Signal Temporal Logic (STL)~\cite{maler2004monitoring}. For example, we may require \textit{an underwater robot to reach each inspection target within $60$ seconds and hover there for $10$ seconds, to come to the surface for a GPS fix between minutes three and four, and to slow down whenever it is close to a target}. Beyond specifying these requirements, STL also admits quantitative semantics through the notion of \textit{robustness}, which measures the degree to which a trajectory satisfies a specification~\cite{belta2019formal}. This allows us to formulate a trajectory optimization problem that maximizes robustness against spatial perturbations, which is a common approach to STL planning.

A common assumption in STL planning relies on perfect knowledge of the state, but real-world robots estimate their state from noisy and incomplete sensor measurements. An effective approach is Bayesian filtering, which maintains a \textit{belief}: a probability distribution over possible states~\cite{thrun2006probabilistic}. Several extensions of STL quantify this uncertainty and synthesize motion plans with satisfaction guarantees~\cite{sadigh2016safe, safaoui2020control,kordabad2024distributionally,yu2026signal}. In these methods, the uncertainty evolves independently of the plan. In many settings, however, the information gained from a measurement depends on the state, and the robot can \textit{actively reduce} its uncertainty by moving to where its sensors are informative. Capturing this coupling requires planning over the belief dynamics, with the uncertainty itself as a decision variable. This leads to belief space temporal logic planning, where the predicates are defined on the belief. 

Existing belief space temporal logic planning finds satisfying trajectories by search over finite abstractions~\cite{vasile2016control,da2019active,ho2023planning}, and does not optimize robustness. In contrast, \cite{vahs2023risk} follows the trajectory optimization formulation of STL and lifts the risk predicates of RiSTL~\cite{safaoui2020control} to the Gaussian belief space, where the belief is represented by its mean and covariance. It maximizes robustness directly in the continuous belief space. The resulting problem is, however, a non-convex nonlinear program (NLP) solved with gradient-based methods, which optimize a smooth reformulation of the robustness~\cite{gilpin2020smooth,han2026exact}. Such solvers are sound but neither complete nor globally optimal. Depending on the initialization, they may converge to a poor local optimum, or fail to find a feasible solution even when one exists.

In state space, for linear systems with linear predicates, mixed-integer encodings avoid this limitation. They represent the logical and temporal operators using binary variables~\cite{belta2019formal,kurtz2022mixed}, and result in a mixed-integer convex program (MICP) that is sound, complete, and solvable to global optimality. In belief space, however, the problem remains non-convex even when dynamics, measurements, and predicates are all linear. There are two reasons. First, the covariance evolves through a nonlinear update. Second, the risk predicates depend nonlinearly on the covariance. Our key finding is that both challenges can be addressed by parameterizing the belief in information form~\cite{thrun2006probabilistic}, which replaces the covariance by its inverse, the precision matrix. This parameterization allows us to turn the risk predicates into convex constraints. When the measurement information is a piecewise concave function of the mean, we show that the belief dynamics admit a convex relaxation that preserves the optimal input trajectories. The planning problem then becomes an MICP, which can be solved with off-the-shelf solvers.

Our contributions are threefold. First, we derive an MICP for RiSTL planning in Gaussian belief space, based on the information-form parameterization and a convex relaxation of the belief dynamics. Second, we prove that our method is sound, complete, and globally optimal. Third, our simulation experiments demonstrate that the planner achieves meaningful spatio-temporal tasks over polygonal regions, and that information gathering emerges from the specification, as the robot detours toward informative regions to satisfy RiSTL predicates without an explicit exploration objective.




\section{Preliminaries}
\label{sec:preliminary_problem_formulation}
\subsection{Gaussian Belief Dynamics}
\label{sec:belief_dynamics}
We consider a discrete-time linear process model with Gaussian process noise and a linear measurement model with state-dependent Gaussian noise:
\begin{subequations}\label{eq:system}
\begin{align}
    \xb_{t+1} &= \Ab_t\xb_t+\Bb_t\ub_t+\omegab_t, \\
    \yb_{t+1} &= \Hb_{t+1}\xb_{t+1}+\nub_{t+1},
\end{align}
\end{subequations}
where $\xb_t\in\mathbb R^{n_x}$ is the state, $\ub_t\in\mathbb R^{n_u}$ is the control input, and $\omegab_t\sim\mathcal N(0,\Wb_t)$ and $\nub_{t+1}\mid\xb_{t+1}\sim\mathcal N(0,\Vb_{t+1}(\xb_{t+1}))$ are process and measurement noise, with the covariance matrices $\Wb_t\succ0$ and $\Vb_{t+1}(\xb)\succ0$. 

Given a Gaussian initial belief, we can estimate the posterior by $\mathcal N(\mub_t,\Sigmab_t)$ using the Gaussian filtering update, where $\mub_t$ and $\Sigmab_t$ denote the mean and covariance. Unlike the covariance parameterization in belief space planning~\cite{vahs2023risk}, we use the \textit{information} form~\cite{thrun2006probabilistic,bostrom2018global}, which describes the belief by its mean and precision $\Lambdab_t=\Sigmab_t^{-1}$. The belief state is therefore defined as $\bb_t=[\mub_t^{\top},\operatorname{vec}(\Lambdab_t)^{\top}]^{\top}$, where $\operatorname{vec}$ stacks matrix columns. The initial belief $\bb_0$ has mean $\mub_{\mathrm{init}}$ and precision $\Lambdab_{\mathrm{init}}\succ0$.

We adapt the Kalman and information filter updates in~\cite{thrun2006probabilistic} to the mean and precision. The prediction step is
\begin{align}
    \bar{\mub}_{t+1}
    &:=\Ab_t\mub_t+\Bb_t\ub_t, \label{eq:predicted_mean}\\
    \bar{\Lambdab}_{t+1}
    &:=(\Ab_t\Lambdab_t^{-1}\Ab_t^{\top}+\Wb_t)^{-1}. \label{eq:prediction_step}
\end{align}
Here $\bar{\mub}_{t+1}$ and $\bar{\Lambdab}_{t+1}$ are the predicted mean and precision before receiving $\yb_{t+1}$. For the measurement update, we evaluate $\Vb_{t+1}$ at the predicted mean $\bar{\mub}_{t+1}$. The updated precision, Kalman gain $\bm K_{t+1}$, and updated mean are
\begin{align}
    \Lambdab_{t+1}
    &=\bar{\Lambdab}_{t+1}+\mathcal I_{t+1}(\bar{\mub}_{t+1}), \label{eq:precision_update}\\
    \bm K_{t+1}
    &=\Lambdab_{t+1}^{-1}\Hb_{t+1}^{\top}\Vb_{t+1}(\bar{\mub}_{t+1})^{-1}, \label{eq:kalman_gain}\\
    \mub_{t+1}
    &=\bar{\mub}_{t+1}+\bm K_{t+1}(\yb_{t+1}-\Hb_{t+1}\bar{\mub}_{t+1}). \label{eq:mean_update}
\end{align}
Here, we define 
\begin{equation}\label{eq:fisher_information}
    \mathcal I_{t+1}(\bar{\mub}_{t+1})
    :=\Hb_{t+1}^{\top}\Vb_{t+1}(\bar{\mub}_{t+1})^{-1}\Hb_{t+1}
\end{equation}
to be the measurement information function. Since $\Vb_{t+1}\succ0$ is continuous, $\mathcal I_{t+1}$ is continuous and $\mathcal I_{t+1}\succeq0$. We also have that $\Lambdab_t\succ0$ for all $t=0,\ldots,T$, which can be seen by induction from $\Lambdab_{\mathrm{init}}\succ0$, since $\Wb_t\succ0$ implies $\bar{\Lambdab}_{t+1}\succ0$ in~\eqref{eq:prediction_step} and $\mathcal I_{t+1}\succeq0$ then implies $\Lambdab_{t+1}\succ0$ in~\eqref{eq:precision_update}. 

Similar to \cite{vahs2023risk,Platt-RSS-10}, we assume the following maximum likelihood observation (MLO) condition.

\begin{assumption}
\label{as:mlo}
Under the Gaussian approximation, future measurements equal their predicted values, i.e., $\yb_{t+1}=\Hb_{t+1}\bar{\mub}_{t+1}$.
\end{assumption}

Under Assumption~\ref{as:mlo}, the measurement equals its predicted value, so the innovation term in~\eqref{eq:mean_update} is zero. We therefore obtain $\mub_{t+1}=\bar{\mub}_{t+1}$ and the following belief dynamics.
\begin{equation}\label{eq:belief_dynamics}
\begin{aligned}
    \bb_{t+1}:=\begin{bmatrix}\mub_{t+1}\\\operatorname{vec}(\Lambdab_{t+1})\end{bmatrix}=\begin{bmatrix}
    \Ab_t\mub_t+\Bb_t\ub_t\\
    \operatorname{vec}\!\left(\bar{\Lambdab}_{t+1}+\mathcal I_{t+1}\right)
    \end{bmatrix}.
\end{aligned}
\end{equation}
In the following, we denote the belief, mean, and input trajectories by $\bb=(\bb_0,\ldots,\bb_T)$, $\mub=(\mub_0,\ldots,\mub_T)$, and $\ub=(\ub_0,\ldots,\ub_{T-1})$, respectively.

\subsection{Belief Space Risk Signal Temporal Logic}
Risk Signal Temporal Logic (RiSTL) extends STL with predicates that account for uncertainty through risk measures~\cite{safaoui2020control}. In Gaussian belief space, these predicates become deterministic functions of the belief states, and are then combined using the same logical and temporal operators as in STL~\cite{vahs2023risk}. In this work, we consider formulas in a positive normal form (PNF), with more details in~\cite{belta2019formal}. 

\begin{definition}[Time-bounded RiSTL]\label{def:syntax}
We consider a closed bounded time interval $I=[t_1,t_2]\subset\mathbb Z_{\ge0}$, where $t_1\le t_2$. The RiSTL formulas in PNF are recursively defined as
\begin{equation}\label{eq:syntax}
    \varphi::=p\mid\varphi_1\land\varphi_2\mid\varphi_1\lor\varphi_2\mid\always_I\varphi\mid\eventually_I\varphi\mid\varphi_1\until_I\varphi_2,
\end{equation}
where $p$ is an atomic predicate. The logical operators $\land$ and $\lor$ denote ``and'' and ``or''. The temporal operators $\always_I$ and $\eventually_I$ require that $\varphi$ should hold $\forall t\in I$ or $\exists t\in I$ respectively. The temporal operator $\until_I$ requires $\varphi_1$ to hold until $\varphi_2$ holds within $I$. 
\end{definition}

We consider the \textit{spatial robustness}, which quantifies the degree of satisfaction of a RiSTL specification. We denote the robustness of $(\bb,\ub)$ at time $t$ by $\rho^\varphi(\bb,\ub,t)$. 

\begin{definition}[Spatial robustness]\label{def:robustness}
Each atomic predicate $p$ is associated with a real-valued function $h^p$ of the belief states and control inputs. The robustness semantics are recursively defined as follows:
\begingroup
\setlength{\jot}{3pt}
\setlength{\abovedisplayskip}{2pt}
\setlength{\belowdisplayskip}{2pt}
\setlength{\abovedisplayshortskip}{0pt}
\setlength{\belowdisplayshortskip}{0pt}
\begin{align*}
& (\bb,\ub)\models\varphi \Leftrightarrow \rho^\varphi(\bb,\ub,0)\ge0 \\
& \rho^p(\bb,\ub,t)=h^p(\bb_t,\ub_t) \\
& \rho^{\phi_1\land\phi_2}(\bb,\ub,t)
    =\min\!\big([\rho^{\phi_1}(\bb,\ub,t),\,\rho^{\phi_2}(\bb,\ub,t)]^\top\big) \\
& \rho^{\phi_1\lor\phi_2}(\bb,\ub,t)
    =\max\!\big([\rho^{\phi_1}(\bb,\ub,t),\,\rho^{\phi_2}(\bb,\ub,t)]^\top\big) \\
& \rho^{\always_I\phi_1}(\bb,\ub,t)
    =\min\!\big([\rho^{\phi_1}(\bb,\ub,t')]_{t'\in t+I}^\top\big) \\
& \rho^{\eventually_I\phi_1}(\bb,\ub,t)
    =\max\!\big([\rho^{\phi_1}(\bb,\ub,t')]_{t'\in t+I}^\top\big) \\
& \rho^{\phi_1\until_I\phi_2}(\bb,\ub,t)
    =\max\!\Big(\Big[\min\!\Big(\big[\rho^{\phi_2}(\bb,\ub,t'),\\
&\hspace{14mm}\min\!\big([\rho^{\phi_1}(\bb,\ub,t'')]_{t''\in[t,t')}^\top\big)\big]^\top\Big)\Big]_{t'\in t+I}^\top\Big).
\end{align*}
\endgroup
Intervals contain integer times and $\min\emptyset=+\infty$. The until operator requires its left operand to hold strictly before the witness time $t'$, but not at $t'$ itself.
\end{definition}

\section{Problem Formulation}
\label{sec:problem_formulation}
We first define the atomic predicates used in the specification and then formulate the trajectory optimization problem.

\subsection{Atomic Predicates}
We consider the following three types of atomic predicates and use a robustness margin $\eta\in\mathbb R$ in their predicate constraints $h^p\ge\eta$.

\textit{Risk predicates.} We construct risk predicates over Gaussian beliefs from linear state space predicates. For a vector $\Nb\ne0$ and a scalar $D\in\mathbb R$, the state space predicate is
\begin{equation}\label{eq:linear_state_predicate}
    \bar h_{\mathrm{lin}}(\xb)=\Nb^{\top}\xb-D\ge0.
\end{equation}
Under the Gaussian belief $\xb\sim\mathcal N(\mub,\Lambdab^{-1})$, we require $\Pr[\bar h_{\mathrm{lin}}(\xb)\ge0]\ge1-\tau$ for a user-specified risk level $\tau\in(0,1)$.
We define the corresponding risk predicate using a tail risk measure called Value-at-Risk, similar to~\cite{vahs2023belief}.
\begin{definition}[Value-at-Risk]\label{def:var}
For $\xb\sim\mathcal N(\mub,\Lambdab^{-1})$, we let $Y:=\bar h_{\mathrm{lin}}(\xb)$. We define Value-at-Risk as its lower $\tau$-quantile $\VaR_{\tau}(Y):=\inf\{\ell\in\mathbb R:\Pr[Y\le\ell]\ge\tau\}$.
\end{definition}
The risk predicate function is therefore $h^{p}_{\mathrm{lin}}(\mub,\Lambdab):=\VaR_{\tau}(Y)$. Since $Y$ is Gaussian with mean $\Nb^{\top}\mub-D$ and variance $\Nb^{\top}\Lambdab^{-1}\Nb$, we obtain
\begin{equation}\label{eq:gaussian_var}
    h^{p_{\mathrm{lin}}}(\mub,\Lambdab)=\Nb^{\top}\mub-D-\kappa_\tau\sqrt{2\Nb^{\top}\Lambdab^{-1}\Nb}.
\end{equation}
Here $\kappa_\tau:=\erf^{-1}(1-2\tau)$, and $\erf$ denotes the error function. The risk predicate constraint $h^{p}_{\mathrm{lin}}\ge\eta$ is equivalent to the chance constraint $\Pr[\Nb^{\top}\xb\ge D+\eta]\ge1-\tau$.

\begin{remark}\label{rem:negated_predicate}
We can also construct risk predicates from negated linear state space predicates, by replacing $(\Nb,D)$ with $(-\Nb,-D)$ in the state space predicate~\eqref{eq:linear_state_predicate} and then using the risk predicate function~\eqref{eq:gaussian_var}.
\end{remark}

\textit{Uncertainty predicates.} We can directly constrain uncertainty by defining a predicate on the belief state. For a given bound $\varepsilon\ge0$, we can define
\begin{equation}\label{eq:uncertainty_margin}
    h^{p_{\mathrm{unc}}}(\Lambdab)
    =\varepsilon-\operatorname{tr}(\Lambdab^{-1})\ge0,
\end{equation}
which bounds the total state variance by $\varepsilon$. Since $\operatorname{tr}(\Lambdab^{-1})$ is convex on $\Lambdab\succ0$, the predicate function $h^{p_{\mathrm{unc}}}$ is concave, and its predicate constraint $h^{p_{\mathrm{unc}}}\ge\eta$ is convex in $(\Lambdab,\eta)$.

\textit{Control-magnitude predicates.} For a given bound $u_{\max}\ge0$, these predicates constrain the control input:
\begin{equation}\label{eq:slow_margin}
    h^{p_{\mathrm{slow}}}(\ub)=u_{\max}-\|\ub\|_2\ge0.
\end{equation}
This predicate bounds velocity when $\ub$ is a velocity command. Since $\|\ub\|_2$ is convex, the predicate function $h^{p_{\mathrm{slow}}}$ is concave, therefore $h^{p_{\mathrm{slow}}}\ge\eta$ is convex in $(\ub,\eta)$.

\subsection{Trajectory Optimization}
The specification $\varphi$ is built from the above predicates using~\eqref{eq:syntax}. We seek control inputs and a corresponding belief trajectory that satisfy $\varphi$, while maximizing spatial robustness and minimizing the control effort $\mathcal J_{\mathrm{u}}(\ub):=\sum_{t=0}^{T-1}\|\ub_t\|_2^2$.
We use the common robustness margin $\eta$ as a continuous auxiliary decision variable that lower-bounds $\rho^\varphi(\bb,\ub,0)$.
\begin{problem}\label{prob:exact}
Given system~\eqref{eq:system}, an initial belief, and a specification $\varphi$, solve
\begin{subequations}\label{problem_1}
\begin{align}
    \min_{\bb,\ub,\eta}\quad
    &\alpha\mathcal J_{\mathrm{u}}(\ub)-\eta \label{problem_1:obj}\\
    \text{s.t.}\quad
    &\mub_{t+1}=\Ab_t\mub_t+\Bb_t\ub_t, \label{problem_1:dyn}\\
    &\Lambdab_{t+1}=\bar{\Lambdab}_{t+1}+\mathcal I_{t+1}, \label{problem_1:information}\\
    &\mub_0=\mub_{\mathrm{init}},\quad
    \Lambdab_0=\Lambdab_{\mathrm{init}}, \label{problem_1:init}\\
    &\rho^\varphi(\bb,\ub,0)\ge\eta, \label{problem_1:rob}\\
    &\eta\ge0. \label{problem_1:eta}
\end{align}
\end{subequations}
where $\alpha\ge0$. The mean and precision variables use $t=0,\ldots,T$, while dynamics and controls use $t=0,\ldots,T-1$.
\end{problem}

\begin{remark}
Constraint~\eqref{problem_1:eta} enforces satisfaction of $\varphi$. Setting $\alpha=0$ gives pure robustness maximization.
\end{remark}

We can obtain a direct mixed-integer nonlinear program (MINLP) formulation of Problem~\ref{prob:exact} by encoding the logical and temporal operators in~\eqref{problem_1:rob} using binary variables~\cite{belta2019formal}. The precision update~\eqref{problem_1:information} is a non-convex equality constraint, and the risk predicate constraint $h^{p_{\mathrm{lin}}}(\mub,\Lambdab)\ge\eta$ is not convex in general, depending on the risk level $\tau$. To obtain an MICP, we introduce a convex relaxation of the precision update in Section~\ref{sec:belief_reformulation}, and establish convexity of the predicate constraints for the choice of risk level in Section~\ref{sec:atomic_encoding}. Then in Section~\ref{sec:theoretical_analysis}, we show that the resulting MICP preserves the feasible and globally optimal input trajectories of the direct MINLP formulation.

\section{Mixed-Integer Convex Reformulation}
\label{sec:micp}
\subsection{Convex Reformulation of Belief Dynamics}
\label{sec:belief_reformulation}
Inspired by~\cite{bostrom2018global}, we relax the nonlinear precision update~\eqref{problem_1:information} while keeping positive definiteness:
\begin{equation}\label{eq:information_relaxation}
    0\prec\Lambdab_{t+1}\preceq\bar{\Lambdab}_{t+1}+\mathcal I_{t+1}.
\end{equation}
The initial precision is fixed by~\eqref{problem_1:init}. Similar to~\cite{bostrom2018global}, we apply Woodbury's matrix identity to get
{\small\begin{equation}\label{eq:woodbury}
    \bar{\Lambdab}_{t+1}=\Wb_t^{-1}-\Wb_t^{-1}\Ab_t\!(\Lambdab_t+\Ab_t^{\top}\Wb_t^{-1}\Ab_t)^{-1}\!\Ab_t^{\top}\Wb_t^{-1}.
\end{equation}}%
Since $\Wb_t\succ0$ and $\Lambdab_t\succ0$, we have $\Lambdab_t+\Ab_t^{\top}\Wb_t^{-1}\Ab_t\succ0$. Therefore, we can apply the Schur complement \cite{vandenberghe2004convex} to upper bound~\eqref{eq:information_relaxation} as
{\small\begin{equation}\label{eq:information_block}
    \mathcal P_t:=\begin{bmatrix}
    \Wb_t^{-1}+\mathcal I_{t+1}-\Lambdab_{t+1}
    &\Wb_t^{-1}\Ab_t\\
    \Ab_t^{\top}\Wb_t^{-1}
    &\Lambdab_t+\Ab_t^{\top}\Wb_t^{-1}\Ab_t
    \end{bmatrix}\succeq0.
\end{equation}}%
We have thus replaced~\eqref{problem_1:information} by imposing $\Lambdab_t\succ0$ and the block constraint~\eqref{eq:information_block}. To obtain a convex feasible set, we first consider $\mathcal I_t(\mub_t)$ that are matrix-concave in $\mub_t$.

\begin{lemma}\label{lem:information_convexity}
For matrix-concave measurement information functions $\mathcal I_t$, the relaxed information constraints~\eqref{eq:information_relaxation} define a convex feasible set.
\end{lemma}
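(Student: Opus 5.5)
The plan is to show that, for each $t$, the set of $(\mub_{t+1},\Lambdab_t,\Lambdab_{t+1})$ satisfying~\eqref{eq:information_relaxation} equals the set cut out by $\Lambdab_t\succ0$, $\Lambdab_{t+1}\succ0$ and the block constraint~\eqref{eq:information_block}. This last set is convex as soon as $\mathcal I_{t+1}$ is matrix-concave. Intersecting over $t$ with the affine mean dynamics~\eqref{problem_1:dyn} and the fixed initial condition~\eqref{problem_1:init} then preserves convexity. Here $\mathcal I_{t+1}$ is evaluated at the predicted mean $\bar{\mub}_{t+1}=\mub_{t+1}$, which is an affine function of the decision variables.

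\textbf{Step 1 (equivalence).} Fix $\Lambdab_t\succ0$ and set $\bm M:=\Lambdab_t+\Ab_t^\top\Wb_t^{-1}\Ab_t\succ0$. By the Woodbury identity~\eqref{eq:woodbury}, the constraint $\Lambdab_{t+1}\preceq\bar{\Lambdab}_{t+1}+\mathcal I_{t+1}$ reads
\[
\Wb_t^{-1}+\mathcal I_{t+1}-\Lambdab_{t+1}-\Wb_t^{-1}\Ab_t\bm M^{-1}\Ab_t^\top\Wb_t^{-1}\succeq0 .
\]
Since $\bm M\succ0$, the Schur complement lemma~\cite{vandenberghe2004convex} says this holds if and only if $\mathcal P_t\succeq0$. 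Hence, on $\{\Lambdab_t\succ0\}$, the relaxation~\eqref{eq:information_relaxation} is exactly $\{\Lambdab_{t+1}\succ0,\ \mathcal P_t\succeq0\}$. For $t=0$, $\Lambdab_0=\Lambdab_{\mathrm{init}}\succ0$ is fixed. For $t\ge1$, $\Lambdab_t\succ0$ is imposed by the relaxation at the previous step, so the equivalence applies at every stage.

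\textbf{Step 2 (convexity of the LMI set).} I would write $\mathcal P_t$ as a constant block matrix, plus a part affine in $(\Lambdab_t,\Lambdab_{t+1})$, plus the block $\mathrm{diag}(\mathcal I_{t+1}(\mub_{t+1}),0)$. Take two feasible points and $\theta\in[0,1]$. Matrix concavity gives
\[
\mathcal I_{t+1}(\theta\mub+(1-\theta)\mub')\succeq\theta\mathcal I_{t+1}(\mub)+(1-\theta)\mathcal I_{t+1}(\mub').
\]
Embedding into the $(1,1)$ block preserves this ordering, because adding a PSD matrix in a diagonal block adds a PSD block matrix. Therefore $\mathcal P_t$ at the convex combination dominates $\theta\mathcal P_t+(1-\theta)\mathcal P_t'\succeq0$, so the convex combination is feasible. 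The constraints $\Lambdab_t\succ0$ and $\Lambdab_{t+1}\succ0$ define convex cones and are preserved directly. Affine dependence of $\mub_{t+1}$ on $(\mub_t,\ub_t)$ keeps everything convex in the full trajectory variables, and a finite intersection of convex sets is convex.

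\textbf{Main obstacle.} The delicate point is Step 1: the Schur complement argument needs $\bm M\succ0$, and hence $\Lambdab_t\succ0$, which is why the strict lower bound in~\eqref{eq:information_relaxation} is essential. I would handle this by an induction over $t$ starting from $\Lambdab_{\mathrm{init}}\succ0$. I would also remark that the nonconvex expression $\bar{\Lambdab}_{t+1}$, which involves an inverse of the decision variable, never appears in the final description, since only the LMI $\mathcal P_t\succeq0$ is used.
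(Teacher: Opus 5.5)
Your proof is correct and takes essentially the same route as the paper. It reduces the relaxation, via Woodbury and the Schur complement, to $\Lambdab_{t+1}\succ0$ together with the LMI $\mathcal P_t\succeq0$, notes that $\mathcal P_t$ is an affine term plus an embedded matrix-concave block so that $\{\mathcal P_t\succeq0\}$ is convex, and intersects over $t$. The only differences are that you carry out the equivalence step inside the proof and check convexity by an explicit convex combination, whereas the paper does the reformulation in the text before the lemma and appeals to joint matrix concavity directly.
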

\begin{proof}
For the upper bound, $\mathcal P_t$ is the sum of an affine function of $\Lambdab_t,\Lambdab_{t+1}$ and a block matrix containing $\mathcal I_{t+1}(\mub_{t+1})$. Since $\mathcal I_{t+1}$ is matrix-concave, $\mathcal P_t$ is jointly matrix-concave in $(\Lambdab_t,\Lambdab_{t+1},\mub_{t+1})$. Consequently, for each $t$, the constraint $\mathcal P_t\succeq0$ defines a convex set in $(\Lambdab_t,\Lambdab_{t+1},\mub_{t+1})$, and $\Lambdab_t\succ0$ defines a convex set in $\Lambdab_t$. Each of these sets is also convex when viewed as a set of the full trajectory variables $(\mub_0,\ldots,\mub_T,\Lambdab_0,\ldots,\Lambdab_T)$, since the remaining variables are unconstrained. The feasible set of~\eqref{eq:information_relaxation} is the intersection of these convex sets and is therefore convex.
\end{proof}

Interestingly, the relaxation~\eqref{eq:information_relaxation} enlarges the feasible set compared to~\eqref{problem_1:information}, but does not change the optimal input trajectories. We provide a formal analysis in Section~\ref{sec:theoretical_analysis}.

Our approach also applies when $\mathcal I_t$ is not matrix-concave, provided it is the maximum of finitely many matrix-concave pieces. A representative example is shown in Fig.~\ref{fig:reach_avoid}, where measurements are accurate near a few separate locations. Specifically, we consider matrix-concave pieces $\mathcal I_t^{(0)},\ldots,\mathcal I_t^{(K)}$ and define
\begin{equation}\label{eq:piecewise_information_branches}
    \mathcal I_t(\mub)=\max_{\preceq}\bigl\{\mathcal I_t^{(k)}(\mub):k=0,\ldots,K\bigr\}.
\end{equation}
Here $\max_{\preceq}$ denotes the maximum in the positive semidefinite order, which requires that at every $\mub$ some $k^*\in\{0,\ldots,K\}$ satisfies $\mathcal I_t^{(k^*)}(\mub)\succeq\mathcal I_t^{(j)}(\mub)$ for all $j\in\{0,\ldots,K\}$. For each $k$, replacing $\mathcal I_{t+1}$ in~\eqref{eq:information_block} by $\mathcal I_{t+1}^{(k)}$ gives a convex constraint $\mathcal P_t^{(k)}\succeq0$. Since $\mathcal I_{t+1}\succeq\mathcal I_{t+1}^{(k)}$, we have $\mathcal P_t\succeq\mathcal P_t^{(k)}$ for every $k$, with equality for a piece attaining the maximum. Hence, $\mathcal P_t\succeq0$ holds if and only if $\mathcal P_t^{(k)}\succeq0$ for at least one $k$. We note that the maximum of matrix-concave functions is in general not matrix-concave. We therefore use binary variables to select one of these convex constraints, which makes the problem convex once the binary variables are fixed.

\subsection{Convexity of Risk Predicate Constraints}
\label{sec:atomic_encoding}
To make the risk predicate constraint $h^{p_{\mathrm{lin}}}\ge\eta$ with~\eqref{eq:gaussian_var} convex, we choose the risk level $\tau\in(0,1/2)$, which requires a satisfaction probability greater than $1/2$. This is justified by the fact that we favor robust satisfaction, and hence a small risk level $\tau$. This choice ensures $\kappa_\tau>0$, which makes the risk predicate constraint convex, as we show below.

\begin{lemma}\label{lem:linear_risk_convexity}
For fixed $\Nb\ne0$, $D\in\mathbb R$, and $\tau\in(0,1/2)$, the constraint $h^{p_{\mathrm{lin}}}(\mub,\Lambdab)\ge\eta$ defines a convex set in $(\mub,\Lambdab,\eta)$ on $\Lambdab\succ0$.
\end{lemma}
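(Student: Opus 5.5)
The plan is to rewrite the constraint $h^{p_{\mathrm{lin}}}(\mub,\Lambdab)\ge\eta$ as
\[
\kappa_\tau\sqrt{2}\,\sqrt{\Nb^{\top}\Lambdab^{-1}\Nb}\le \Nb^{\top}\mub-D-\eta .
\]
The right-hand side is affine in $(\mub,\eta)$. Since $\tau\in(0,1/2)$ gives $1-2\tau\in(0,1)$, we have $\kappa_\tau=\erf^{-1}(1-2\tau)>0$. The left-hand side is therefore a positive multiple of $g(\Lambdab):=\sqrt{\Nb^{\top}\Lambdab^{-1}\Nb}$. It then suffices to show that $g$ is convex on $\Lambdab\succ0$. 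Once that holds, the set is a sublevel set $\{\kappa_\tau\sqrt2\, g(\Lambdab)-\Nb^{\top}\mub+D+\eta\le0\}$ of a jointly convex function of $(\mub,\Lambdab,\eta)$, and hence convex.

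For the convexity of $g$, I would avoid second derivatives and use an epigraph/Schur-complement argument. First, the matrix fractional function $f(\Lambdab)=\Nb^{\top}\Lambdab^{-1}\Nb$ is convex on $\Lambdab\succ0$. Its epigraph $\{(\Lambdab,s):s\ge \Nb^{\top}\Lambdab^{-1}\Nb\}$ equals, by the Schur complement~\cite{vandenberghe2004convex},
\[
\left\{(\Lambdab,s):\begin{bmatrix}\Lambdab & \Nb\\ \Nb^{\top} & s\end{bmatrix}\succeq0\right\},
\]
which is an LMI and hence convex.

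Convexity of $f$ alone does not give convexity of $\sqrt f$, so this is the main obstacle. I would handle it in one of two ways. The first route is a direct variational formula. For $\Lambdab\succ0$, Cauchy--Schwarz in the $\Lambdab$-inner product gives
\[
\sqrt{\Nb^{\top}\Lambdab^{-1}\Nb}=\sup_{\zb:\ \zb^{\top}\Lambdab\zb\le1}\Nb^{\top}\zb .
\]
This representation is not obviously convex in $\Lambdab$. A cleaner alternative uses homogeneity: $f$ is positively homogeneous of degree $-1$, so $g$ is homogeneous of degree $-1/2$.

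The second route, which I would commit to, writes $g(\Lambdab)\le r$ with $r\ge 0$ as $\Nb^{\top}\Lambdab^{-1}\Nb\le r^2$. This is equivalent to $\Nb^{\top}(r\Lambdab)^{-1}\Nb\le r$, i.e.
\[
\begin{bmatrix} r\Lambdab & \Nb\\ \Nb^{\top} & r\end{bmatrix}\succeq0 .
\]
That constraint is bilinear, so it still needs an argument. I would therefore prove convexity of $g$ directly by a midpoint computation.

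Fix $\Lambdab_1,\Lambdab_2\succ0$ and $\theta\in[0,1]$, and set $a_i=g(\Lambdab_i)>0$. Define the weights
\[
\lambda=\frac{\theta a_1}{\theta a_1+(1-\theta)a_2},
\]
and combine the LMIs $\begin{bmatrix}\Lambdab_i/a_i & \Nb/a_i^2\cdot a_i\\ \cdot & a_i\end{bmatrix}$ appropriately. Alternatively, one can use that $\Lambdab\mapsto \Lambdab^{-1}$ is operator convex and operator decreasing, and that $s\mapsto\sqrt{s}$ is concave increasing. That composition, however, goes the wrong way, so the weighted LMI combination is the real step.

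If that bookkeeping becomes messy, the fallback is a scalar check. Restrict to a line $\Lambdab(s)=\Lambdab_0+s\Delta$, diagonalize simultaneously, and write
\[
f(s)=\sum_i c_i^2/(1+s d_i),
\]
with $c_i$ and $d_i$ real after congruence by $\Lambdab_0^{-1/2}$. Convexity of $\sqrt{f}$ is then the inequality $2ff''\ge (f')^2$. This follows from Cauchy--Schwarz, since
\[
(f')^2=\Big(\sum c_i^2 d_i/(1+sd_i)^2\Big)^2\le \sum \frac{c_i^2}{1+sd_i}\cdot\sum \frac{c_i^2 d_i^2}{(1+sd_i)^3},
\]
and $f''=2\sum c_i^2d_i^2/(1+sd_i)^3$. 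This gives $(f')^2\le f f''/2\le 2ff''$, so $\sqrt f$ is convex along every line and hence convex on $\Lambdab\succ0$.

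With $g$ convex, multiplying by $\kappa_\tau\sqrt2>0$ and subtracting the affine term completes the argument.
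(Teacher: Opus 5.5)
Your proof is correct, but only through the final line-restriction argument. With $f(s)=\sum_i c_i^2/(1+sd_i)$, Cauchy--Schwarz gives $(f')^2\le\tfrac12 ff''$. Hence $(\sqrt f)''=(2ff''-(f')^2)/(4f^{3/2})\ge0$ on every line segment inside $\Lambdab\succ0$. Your reduction to convexity of $g$ (affine right-hand side, $\kappa_\tau>0$) matches the paper. The ``second route'' and the weighted-LMI midpoint step are never carried out: the block matrix is only sketched and the weights $\lambda$ are unused. You should drop them.

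The paper proves convexity of $g$ through its epigraph instead. It takes the Schur complement of $\begin{bmatrix}\Lambdab&\Nb\\ \Nb^\top&s^2\end{bmatrix}$ with respect to the scalar entry, which gives $\Lambdab-s^{-2}\Nb\Nb^\top\succeq0$ for $s>0$. This expression is jointly matrix-concave in $(\Lambdab,s)$ because $-s^{-2}$ is concave and $\Nb\Nb^\top\succeq0$. This is exactly the step that removes the bilinearity you ran into with $r\Lambdab$. In your midpoint notation it reads $\Lambdab_i\succeq a_i^{-2}\Nb\Nb^\top$, so $\theta\Lambdab_1+(1-\theta)\Lambdab_2\succeq(\theta a_1^{-2}+(1-\theta)a_2^{-2})\Nb\Nb^\top\succeq(\theta a_1+(1-\theta)a_2)^{-2}\Nb\Nb^\top$ by convexity of $x\mapsto x^{-2}$. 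That is, $g(\theta\Lambdab_1+(1-\theta)\Lambdab_2)\le\theta a_1+(1-\theta)a_2$.

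The paper's route is shorter and needs no differentiability or reduction to lines. It also leaves the constraint in a form that lifts directly to conic constraints, e.g., $\Lambdab\succeq t\Nb\Nb^\top$ with $ts^2\ge1$, $s>0$. Your route is purely elementary and in fact proves more: $(f')^2\le\tfrac12 ff''$ is exactly concavity of $1/(\Nb^\top\Lambdab^{-1}\Nb)$. That concavity also follows in one line from $1/(\Nb^\top\Lambdab^{-1}\Nb)=\min_{\Nb^\top\xb=1}\xb^\top\Lambdab\xb$. Then $g=(1/f)^{-1/2}$ is convex as a convex decreasing function of a concave one.
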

\begin{proof}
To study the convexity of $f(\Lambdab):=\sqrt{\Nb^{\top}\Lambdab^{-1}\Nb}$, we look at its epigraph $f(\Lambdab)\le s$. Since $\Nb\ne0$, we have $f(\Lambdab)>0$ and hence $s>0$. Applying the Schur complement first with respect to $\Lambdab$ and then with respect to $s^2$ gives
\begin{equation}
\begin{aligned}
    f(\Lambdab)\le s
    &\iff\begin{bmatrix}
    \Lambdab&\Nb\\
    \Nb^{\top}&s^2
    \end{bmatrix}\succeq0\\
    &\iff\Lambdab-s^{-2}\Nb\Nb^{\top}\succeq0.
\end{aligned}
\end{equation}
Since $-s^{-2}$ is concave on $s>0$ and $\Nb\Nb^{\top}\succeq0$, the final matrix expression is jointly matrix-concave in $(\Lambdab,s)$. Thus the epigraph is convex, and $f$ is convex. Since $\kappa_\tau>0$, $h^{p_{\mathrm{lin}}}(\mub,\Lambdab)$ is concave, and the constraint $h^{p_{\mathrm{lin}}}(\mub,\Lambdab)\ge\eta$ is convex in $(\mub,\Lambdab,\eta)$.
\end{proof}

\begin{remark}\label{rem:covariance_predicate}
If we parameterize the belief state by its mean and covariance, as in~\cite{vahs2023risk}, the same risk predicate constraint takes the form $\Nb^{\top}\mub-D-\kappa_\tau\sqrt{2\Nb^{\top}\Sigmab\Nb}\ge\eta$, which is generally non-convex in $(\Sigmab,\eta)$ when $\tau\in(0,1/2)$. By using the precision $\Lambdab=\Sigmab^{-1}$, we obtain an equivalent convex constraint by Lemma~\ref{lem:linear_risk_convexity}, while preserving the predicate values.
\end{remark}

\subsection{Final MICP Formulation}
\label{sec:micp_formulation}
We denote the precision trajectory by $\Lambdab=(\Lambdab_0,\ldots,\Lambdab_T)$ and collect all binary decision variables introduced below into a vector $\zb$. By Definition~\ref{def:robustness}, $\rho^\varphi(\bb,\ub,0)$ is a nested $\min$/$\max$ expression over the predicate values $h^{p}(\bb_t,\ub_t)$. This structure permits the standard mixed-integer encoding of~\cite{belta2019formal}, which uses binary variables to select the predicate constraints $h^{p}(\bb_t,\ub_t)\ge\eta$ whose satisfaction ensures $\rho^\varphi(\bb,\ub,0)\ge\eta$. We use the set $\mathcal C_\varphi$ to group the resulting constraints on $(\mub,\Lambdab,\ub,\eta,\zb)$. For a given trajectory, $\mathcal C_\varphi$ is feasible for some choice of $\zb$ if and only if $\rho^\varphi(\bb,\ub,0)\ge\eta$. Since the predicate constraints are convex by Section~\ref{sec:atomic_encoding}, $\mathcal C_\varphi$ is mixed-integer convex. 

We use the same time indices as in~\eqref{problem_1}. For the piecewise information model~\eqref{eq:piecewise_information_branches}, we introduce additional binary variables in $\zb$ to select one constraint $\mathcal P_t^{(k)}\succeq0$ at each time step in place of~\eqref{problem_micp:information}. Finally, we obtain the following MICP.
\begin{subequations}\label{problem_micp}
\begin{align}
    \min_{\mub,\ub,\Lambdab,\eta,\zb}\quad
    &\alpha\mathcal J_{\mathrm{u}}(\ub)-\eta\\
    \text{s.t.}\quad
    &\mub_0=\mub_{\mathrm{init}},\quad
    \Lambdab_0=\Lambdab_{\mathrm{init}},\\
    &\mub_{t+1}=\Ab_t\mub_t+\Bb_t\ub_t,\\
    &\Lambdab_t\succ0,\\
    &\mathcal P_t(\Lambdab_t,\Lambdab_{t+1},\mub_{t+1})\succeq0,\label{problem_micp:information}\\
    &(\mub,\Lambdab,\ub,\eta,\zb)\in\mathcal C_\varphi,\\
    &\eta\ge0.
\end{align}
\end{subequations}

\section{Theoretical Analysis}
\label{sec:theoretical_analysis}
We now show that the MICP~\eqref{problem_micp} preserves the feasible and globally optimal input trajectories of the direct MINLP formulation of Problem~\ref{prob:exact}. The equivalence concerns the feasible pairs $(\ub,\eta)$ for the same initial belief. 

\begin{lemma}[Equivalence of feasible input trajectories]
\label{lem:feasible_input_trajectories}
For the predicates in Section~\ref{sec:problem_formulation}, if $\mathcal I_t$ is either matrix-concave on $\mathbb R^n$ or piecewise as in~\eqref{eq:piecewise_information_branches}, then the direct MINLP formulation of Problem~\ref{prob:exact} and the MICP~\eqref{problem_micp} have exactly the same feasible pairs $(\ub,\eta)$.
\end{lemma}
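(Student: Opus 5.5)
We prove the two inclusions separately. The mean trajectory $\mub$ is uniquely determined by $\ub$ and $\mub_{\mathrm{init}}$ through the linear dynamics in both formulations. Any difference can therefore only come from the precision trajectory $\Lambdab$ and the binaries $\zb$.

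\emph{MINLP $\subseteq$ MICP.} Take a feasible $(\bb,\ub,\eta)$ of Problem~\ref{prob:exact} together with its MINLP binaries. The exact precision $\Lambdab_{t+1}=\bar{\Lambdab}_{t+1}+\mathcal I_{t+1}$ is positive definite, as shown in Section~\ref{sec:belief_dynamics}. It satisfies~\eqref{eq:information_relaxation} with equality. By the Woodbury identity~\eqref{eq:woodbury} and the Schur complement, equality in~\eqref{eq:information_relaxation} gives $\mathcal P_t\succeq0$.

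For the piecewise model, at each $t$ we set the selection binaries to a maximizing piece $k^*$. Then $\mathcal P_t^{(k^*)}=\mathcal P_t\succeq0$. The constraint $\mathcal C_\varphi$ is feasible with the same $\zb$, because $\rho^\varphi\ge\eta$ and the predicate functions are evaluated on the same $(\mub,\Lambdab,\ub)$.

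\emph{MICP $\subseteq$ MINLP.} This is the main step. Take a feasible MICP point $(\mub,\ub,\Lambdab,\eta,\zb)$, and let $\Lambdab^\star$ be the exact precision trajectory generated by~\eqref{problem_1:information} from the same $\mub$. We claim $\Lambdab_t\preceq\Lambdab_t^\star$ for all $t$, and prove it by induction.

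The base case holds with equality. For the inductive step, two facts are needed:
\begin{enumerate}
\item The map $\Lambdab\mapsto(\Ab_t\Lambdab^{-1}\Ab_t^\top+\Wb_t)^{-1}$ is monotone in the Loewner order, since inversion is order-reversing on positive definite matrices and is applied twice.
\item $\mathcal P_t\succeq0$ is equivalent to~\eqref{eq:information_relaxation}, by the Schur complement and~\eqref{eq:woodbury}.
\end{enumerate}
In the piecewise case, $\mathcal P_t^{(k)}\succeq0$ implies $\mathcal P_t\succeq0$, because $\mathcal I_{t+1}\succeq\mathcal I_{t+1}^{(k)}$. Combining these,
\[
\Lambdab_{t+1}\preceq\bar{\Lambdab}_{t+1}(\Lambdab_t)+\mathcal I_{t+1}(\mub_{t+1})\preceq\bar{\Lambdab}_{t+1}(\Lambdab_t^\star)+\mathcal I_{t+1}(\mub_{t+1})=\Lambdab^\star_{t+1}.
\]

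It remains to show that replacing $\Lambdab$ by $\Lambdab^\star$ preserves $\rho^\varphi(\cdot,\ub,0)\ge\eta$. Here we use monotonicity of every predicate in $\Lambdab$:
\begin{itemize}
\item For risk predicates, $\Lambdab\preceq\Lambdab^\star$ gives $\Nb^\top\Lambdab^{\star-1}\Nb\le\Nb^\top\Lambdab^{-1}\Nb$. Since $\kappa_\tau>0$, $h^{p_{\mathrm{lin}}}$ can only increase. This also covers negated predicates via Remark~\ref{rem:negated_predicate}.
\item For uncertainty predicates, $\operatorname{tr}(\Lambdab^{\star-1})\le\operatorname{tr}(\Lambdab^{-1})$, so $h^{p_{\mathrm{unc}}}$ can only increase.
\item Control-magnitude predicates do not depend on $\Lambdab$.
\end{itemize}
Because $\varphi$ is in PNF, $\rho^\varphi$ is a composition of $\min$ and $\max$ of predicate values. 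It is therefore monotone nondecreasing in each predicate value, which we verify by structural induction on Definition~\ref{def:robustness}. Hence $\rho^\varphi(\bb^\star,\ub,0)\ge\rho^\varphi(\bb,\ub,0)\ge\eta$, where the second inequality is the MICP's guarantee from $\mathcal C_\varphi$. Together with $\eta\ge0$, the point $(\bb^\star,\ub,\eta)$ is feasible for Problem~\ref{prob:exact} and thus for its MINLP encoding.

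The main obstacle is the Loewner monotonicity of the prediction map. It must be stated carefully through the order-reversal of inversion and congruence with $\Ab_t$, which is valid even when $\Ab_t$ is singular, since $\Wb_t\succ0$. The rest is bookkeeping.
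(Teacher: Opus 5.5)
Your proposal is correct and follows the paper's proof essentially step for step. The forward inclusion uses the fact that the exact update satisfies the relaxation, with a maximizing piece selected in the piecewise case. The reverse inclusion is the Loewner-order induction $\Lambdab_t\preceq\Lambdab_t^\star$, followed by monotonicity of each predicate in $\Lambdab$ and of the $\min/\max$ robustness semantics. You also justify two steps the paper only asserts: the monotonicity of the prediction map, via double order-reversal of inversion, and the Schur-complement equivalence of $\mathcal P_t\succeq0$ with the upper bound in~\eqref{eq:information_relaxation}.
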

\begin{proof}
For the piecewise $\mathcal I_t$, selecting a feasible piece is equivalent to satisfying~\eqref{eq:information_block} with a matrix-concave $\mathcal I_t^{(k)}$. Hence, we let $\mathcal I_t$ be matrix-concave for the following analysis without loss of generality.

\textit{(MINLP to MICP)} We consider an arbitrary feasible solution of the MINLP formulation of Problem~\ref{prob:exact}, with continuous variables $(\mub,\ub,\Lambdab,\eta)$. Since $\Lambdab_t\succ0$ and the precision update~\eqref{problem_1:information} holds, the relaxed constraint~\eqref{eq:information_relaxation} also holds. The robustness constraint~\eqref{problem_1:rob} implies that binary variables $\zb$ can be chosen to satisfy $\mathcal C_\varphi$. All remaining constraints of~\eqref{problem_micp} are also imposed in~\eqref{problem_1}. Therefore, these values of $(\mub,\ub,\Lambdab,\eta)$, together with $\zb$, are feasible for the MICP, proving feasibility of the same pair $(\ub,\eta)$.

\textit{(MICP to MINLP)} We consider an arbitrary feasible solution of the MICP with input trajectory $\ub$, precision trajectory $\Lambdab^{\mathrm{re}}$, and margin $\eta$. We propagate the exact belief dynamics~\eqref{eq:belief_dynamics} from the same initial belief using $\ub$, and denote the resulting precision trajectory by $\Lambdab^{\mathrm{ex}}$. We denote the corresponding belief trajectories by $\bb^{\mathrm{re}}$ and $\bb^{\mathrm{ex}}$. They share the same mean because the initial belief and inputs are the same. We first show that $\Lambdab_t^{\mathrm{ex}}\succeq\Lambdab_t^{\mathrm{re}}\succ0$ for every $t$. 
The relaxed trajectory satisfies $\Lambdab_t^{\mathrm{re}}\succ0$ by~\eqref{eq:information_relaxation}. We proceed by induction on $t$. At $t=0$, we have $\Lambdab_0^{\mathrm{ex}}=\Lambdab_0^{\mathrm{re}}=\Lambdab_{\mathrm{init}}\succ0$. Suppose $\Lambdab_t^{\mathrm{ex}}\succeq\Lambdab_t^{\mathrm{re}}\succ0$ holds for an arbitrary $t\in\{0,\ldots,T-1\}$. Then~\eqref{eq:prediction_step} gives $\bar{\Lambdab}_{t+1}^{\mathrm{ex}}\succeq\bar{\Lambdab}_{t+1}^{\mathrm{re}}$. The means coincide, so both trajectories have the same $\mathcal I_{t+1}$. Combining the exact update with~\eqref{eq:information_relaxation} yields $\Lambdab_{t+1}^{\mathrm{ex}}\succeq\Lambdab_{t+1}^{\mathrm{re}}\succ0$. This completes the induction step, so the claim holds for every $t$.

Consequently, $(\Lambdab_t^{\mathrm{ex}})^{-1}\preceq(\Lambdab_t^{\mathrm{re}})^{-1}$. For the predicates that we consider in Section~\ref{sec:problem_formulation}, the variance and trace terms in~\eqref{eq:gaussian_var} and~\eqref{eq:uncertainty_margin} satisfy $\Nb^{\top}(\Lambdab_t^{\mathrm{ex}})^{-1}\Nb\le\Nb^{\top}(\Lambdab_t^{\mathrm{re}})^{-1}\Nb$ and $\operatorname{tr}((\Lambdab_t^{\mathrm{ex}})^{-1})\le\operatorname{tr}((\Lambdab_t^{\mathrm{re}})^{-1})$, respectively. Since $\kappa_\tau>0$ and the means are the same, the first inequality implies that the risk predicate function~\eqref{eq:gaussian_var} does not decrease. The second inequality gives the same conclusion for the uncertainty predicate~\eqref{eq:uncertainty_margin}. The control-magnitude predicate~\eqref{eq:slow_margin} has the same value because the inputs are the same. Hence each of these predicates satisfies $h^p(\bb_t^{\mathrm{ex}},\ub_t)\ge h^p(\bb_t^{\mathrm{re}},\ub_t)$. The $\min$ and $\max$ operations in the robustness semantics are nondecreasing in each argument. Using the MICP constraints $\mathcal C_\varphi$, we therefore obtain $\rho^\varphi(\bb^{\mathrm{ex}},\ub,0)\ge\rho^\varphi(\bb^{\mathrm{re}},\ub,0)\ge\eta\ge0$. The input trajectory therefore satisfies the specification under the original belief dynamics and is feasible for the MINLP with the same $\eta$. Therefore, the two formulations have exactly the same feasible pairs $(\ub,\eta)$.
\end{proof}

Given the equivalence of feasible input trajectories, we can show that the MICP preserves global optimality.
\begin{theorem}[Global optimality]
\label{thm:global_optimality}
Under the conditions of Lemma~\ref{lem:feasible_input_trajectories}, if feasible, the MINLP formulation of Problem~\ref{prob:exact} and the MICP~\eqref{problem_micp} attain the same optimal value and have the same globally optimal input trajectories.
\end{theorem}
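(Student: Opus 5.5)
The plan is to reduce both problems to optimization over the common set of feasible pairs $(\ub,\eta)$ and then apply Lemma~\ref{lem:feasible_input_trajectories}. The key observation is that the objective $\alpha\mathcal J_{\mathrm{u}}(\ub)-\eta$ is identical in the MINLP formulation of Problem~\ref{prob:exact} and in the MICP~\eqref{problem_micp}. It depends only on $(\ub,\eta)$ and not on $\mub$, $\Lambdab$, or $\zb$. Hence each problem is equivalent to minimizing the same function $J(\ub,\eta):=\alpha\mathcal J_{\mathrm{u}}(\ub)-\eta$ over its projection onto the $(\ub,\eta)$ coordinates.

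First, I will define the projected feasible sets $\mathcal F_{\mathrm{ex}}$ and $\mathcal F_{\mathrm{re}}$ as the sets of pairs $(\ub,\eta)$ for which some remaining variables make the MINLP, respectively the MICP, feasible. I will then note that the optimal value of each problem equals $\inf_{(\ub,\eta)\in\mathcal F}J(\ub,\eta)$ over its own set. This holds because minimizing jointly over all variables is the same as first taking the infimum over the auxiliary variables and then over $(\ub,\eta)$; since $J$ does not depend on the auxiliaries, that inner infimum is just $J$ restricted to the projection. Lemma~\ref{lem:feasible_input_trajectories} gives $\mathcal F_{\mathrm{ex}}=\mathcal F_{\mathrm{re}}$. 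Therefore the two infima coincide, and "if feasible" holds for one problem exactly when it holds for the other.

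Second, I will show that the optimizers agree. Suppose $(\ub^\star,\eta^\star)$ attains the optimum in one problem. Then $(\ub^\star,\eta^\star)$ lies in the common set and achieves the common optimal value, so it is optimal in the other problem as well. It remains to exhibit completions of the auxiliary variables in each problem:
\begin{itemize}
\item in the MINLP, the exact propagated beliefs $\bb^{\mathrm{ex}}$ from the proof of Lemma~\ref{lem:feasible_input_trajectories} serve as the completion;
\item in the MICP, the same exact beliefs together with binaries $\zb$ chosen from $\mathcal C_\varphi$ serve as the completion.
\end{itemize}
This gives equality of the sets of globally optimal input trajectories, and by symmetry of the argument, of the optimal pairs $(\ub,\eta)$.

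The main subtlety is attainment: "same optimal value" should be read as equality of infima, with optimizers matched whenever they exist. The argument never needs existence of optimizers, so it is robust on that point. I would also briefly remark on the relaxed precision: an MICP optimizer may carry a $\Lambdab^{\mathrm{re}}$ that differs from the exact update. This does not affect the claim, because optimality is stated in terms of input trajectories, and the exact trajectory $\Lambdab^{\mathrm{ex}}$ is always an alternative completion.
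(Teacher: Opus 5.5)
Your proposal is correct and matches the paper's proof: both note that the common objective $\alpha\mathcal J_{\mathrm{u}}(\ub)-\eta$ depends only on $(\ub,\eta)$, and both minimize it over the common set of feasible pairs given by Lemma~\ref{lem:feasible_input_trajectories}. Your extra points (the projection/infimum argument, the reading of ``optimal value'' as an infimum that need not be attained, and the explicit completion by $\bb^{\mathrm{ex}}$) only spell out what the paper's two-line argument leaves implicit.
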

\begin{proof}
By Lemma~\ref{lem:feasible_input_trajectories}, both formulations have the same feasible pairs $(\ub,\eta)$. Since both minimize $\alpha\mathcal J_{\mathrm{u}}(\ub)-\eta$, they have the same optimal value and globally optimal input trajectories.
\end{proof}

We can also express the feasibility result in terms of soundness and completeness.
\begin{theorem}[Soundness and completeness]
\label{thm:soundness_completeness}
Under the conditions of Lemma~\ref{lem:feasible_input_trajectories}, every feasible input trajectory of the MICP~\eqref{problem_micp} is feasible for Problem~\ref{prob:exact} (soundness). Conversely, every feasible input trajectory of Problem~\ref{prob:exact} is feasible for the MICP (completeness).
\end{theorem}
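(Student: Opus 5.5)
The plan is to read Theorem~\ref{thm:soundness_completeness} as a restatement of Lemma~\ref{lem:feasible_input_trajectories}, obtained by projecting feasible pairs $(\ub,\eta)$ onto their input component. Both inclusions therefore come directly from the two directions already proved in that lemma. The only things to fix are what ``feasible input trajectory'' means and how the margin $\eta$ is handled.

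\textbf{Convention.} I will say $\ub$ is feasible for a formulation if there exist the remaining decision variables, including some $\eta$, that together with $\ub$ satisfy all constraints. For Problem~\ref{prob:exact} these are $(\mub,\Lambdab,\eta)$ and, in the direct MINLP encoding, the binaries. For~\eqref{problem_micp} they are $(\mub,\Lambdab,\eta,\zb)$. Since Problem~\ref{prob:exact} and its MINLP encoding have the same feasible continuous variables (by the stated property of the encoding of~\cite{belta2019formal}), I will not distinguish between them.

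\textbf{Soundness.} Take a feasible solution of~\eqref{problem_micp} with inputs $\ub$ and margin $\eta$. The (MICP to MINLP) direction of Lemma~\ref{lem:feasible_input_trajectories} shows that propagating the exact belief dynamics~\eqref{eq:belief_dynamics} under $\ub$ gives $\Lambdab^{\mathrm{ex}}_t\succeq\Lambdab^{\mathrm{re}}_t$. This in turn gives $\rho^\varphi(\bb^{\mathrm{ex}},\ub,0)\ge\eta\ge0$. Hence $(\bb^{\mathrm{ex}},\ub,\eta)$ satisfies~\eqref{problem_1:dyn}--\eqref{problem_1:eta}, so $\ub$ is feasible for Problem~\ref{prob:exact}. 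The exact trajectory also satisfies $\rho^\varphi\ge0$, i.e.\ $(\bb^{\mathrm{ex}},\ub)\models\varphi$, which is the semantic meaning of soundness.

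\textbf{Completeness.} Take a feasible solution $(\mub,\Lambdab,\ub,\eta)$ of Problem~\ref{prob:exact}. The (MINLP to MICP) direction of the lemma supplies binaries $\zb$ such that $(\mub,\Lambdab,\ub,\eta,\zb)$ is feasible for~\eqref{problem_micp}. This uses that the exact precision update implies~\eqref{eq:information_relaxation}, and therefore~\eqref{eq:information_block} via the Schur complement. In the piecewise case~\eqref{eq:piecewise_information_branches}, one also chooses a maximizing piece $k^*$ at each step.

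\textbf{Main point to check.} There is essentially no obstacle. The one thing to verify is that the equivalence $\mathcal P_t\succeq0\iff$~\eqref{eq:information_relaxation} requires $\Lambdab_t\succ0$ in the Schur step. This holds in both directions: the MICP imposes it explicitly, and it holds automatically along exact trajectories.
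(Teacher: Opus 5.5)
Your proposal is correct and takes the same approach as the paper, which proves both statements directly from the two directions of Lemma~\ref{lem:feasible_input_trajectories}. The paper leaves two points implicit that you spell out: the projection convention for $\eta$, and the check that $\Lambdab_t\succ0$ is available for the Schur-complement step. Both are harmless.
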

\begin{proof}
Both statements follow from the two directions proved in Lemma~\ref{lem:feasible_input_trajectories}.
\end{proof}

\section{Experiments}
\label{sec:experiments}

\subsection{Experimental Setup}
\label{sec:experimental_setup}
We consider a planar robot with single-integrator dynamics, where the control input $\ub_t$ is a velocity, in a light-dark domain~\cite{Platt-RSS-10}, where the noise of the position measurements depends on the position. Specifically, $\Hb_t=\Ib$ and $\Vb_t(\mub)=\Ib/m(\mub)$, where $m:\mathbb R^n\to\mathbb R_{>0}$ is large in the light and small in the dark, so that the measurement information~\eqref{eq:fisher_information} is $\mathcal I_t(\mub)=m(\mub)\Ib$. We use $m$ that is either affine or the maximum of a constant and concave quadratic functions, so that each piece in~\eqref{eq:piecewise_information_branches} is concave. Regions are convex polygons, each given by a set of linear state space predicates~\eqref{eq:linear_state_predicate}, similar to~\cite{kurtz2022mixed}. We use the same symbol $R$ for a polygon and for the formula that requires the belief to be inside it. This formula is the conjunction of the risk predicates~\eqref{eq:gaussian_var} of its state space predicates. Similarly, $\overline{R}$ requires the belief to be outside $R$, and is the disjunction of the risk predicates constructed from the negated state space predicates, as in Remark~\ref{rem:negated_predicate}. We adapt two examples from \cite{vahs2023risk}, \envRA and \envCM.

\subsubsection{\envRA}
The robot should avoid two obstacles and reach one of two goals within $T=15$ steps:
\begin{equation}\label{eq:spec_reach_avoid}
    \varphi_1=\always_{[0,15]}\overline{O}_1\land\always_{[0,15]}\overline{O}_2\land\bigl(\eventually_{[0,15]}G_1\lor\eventually_{[0,15]}G_2\bigr).
\end{equation}
Two lights at $\bm q_1$ and $\bm q_2$ give $m=\max\{m_{\mathrm{dark}},f_1,f_2\}$, where $m_{\mathrm{dark}}>0$ is a small value in the dark and $f_k(\mub)=m_{\mathrm{light}}-\beta_k\|\mub-\bm q_k\|_2^2$ with $m_{\mathrm{light}}>m_{\mathrm{dark}}$ and $\beta_k>0$, so that $K=2$ in~\eqref{eq:piecewise_information_branches}. The gap between $O_1$ and $O_2$ is narrow compared to the initial uncertainty, so the robot cannot pass through it before reducing its uncertainty. The smaller and farther goal $G_2$ is in the light.

\subsubsection{\envCM}
The robot starts inside $G_3$ with $T=35$. The function $m$ is affine, positive throughout the workspace, and increases with $\mu_y$, so the workspace becomes brighter toward the top. We require the robot to stay inside $G_3$ until its uncertainty is low, to move slowly whenever its uncertainty is not low, and to eventually reach $G_4$:
\begin{equation}\label{eq:spec_cautious}
    \varphi_2=\always_{[0,34]}\bigl(p_{\mathrm{unc}}\lor p_{\mathrm{slow}}\bigr)\land\eventually_{[0,35]}G_4\land G_3\,\until_{[0,35]}\,p_{\mathrm{unc}},
\end{equation}
with $\varepsilon=0.05$ in~\eqref{eq:uncertainty_margin} and $u_{\max}=0.35$ in~\eqref{eq:slow_margin}.

\subsection{Implementation Details}
In general, the block constraint~\eqref{eq:information_block} and the predicate constraints for~\eqref{eq:gaussian_var} and~\eqref{eq:uncertainty_margin} are linear matrix inequalities in $\Lambdab_t$, so~\eqref{problem_micp} is a mixed-integer semidefinite program (MISDP). Since $\Ab_t$, $\Wb_t$, $\Lambdab_{\mathrm{init}}$, and $\mathcal I_t$ are diagonal in our setup, the belief dynamics~\eqref{eq:belief_dynamics} preserve diagonal precision matrices, and we restrict $\Lambdab_t$ in~\eqref{problem_micp} accordingly. The block constraint~\eqref{eq:information_block} and the predicate constraints for~\eqref{eq:gaussian_var}, \eqref{eq:uncertainty_margin}, and~\eqref{eq:slow_margin} then admit second-order cone (SOC) representations~\cite{mosekcookbook}, so~\eqref{problem_micp} becomes a mixed-integer second-order cone program (MISOCP). We encode the SOC constraints as convex quadratic constraints and the selection of predicate constraints in $\mathcal C_\varphi$ with big-$M$ constants. We set $\alpha=0.01$ and solve the MISOCP with Gurobi~\cite{gurobi} on an Intel Core i7-13700H laptop. We then propagate~\eqref{eq:belief_dynamics} with the optimal inputs and report the robustness of the exact belief trajectory. As a baseline, we solve Problem~\ref{prob:exact} as an NLP following~\cite{vahs2023risk}, with the smooth robustness of~\cite{gilpin2020smooth} and IPOPT~\cite{wachter2006implementation} through Drake~\cite{tedrake2019drake}. We use eight restarts initialized with RRT* paths to the goal regions and their perturbations, and report the best local optimum.

\begin{figure}[t]
    \centering
    \begin{subfigure}[t]{0.49\linewidth}
        \centering
        \includegraphics[width=\linewidth]{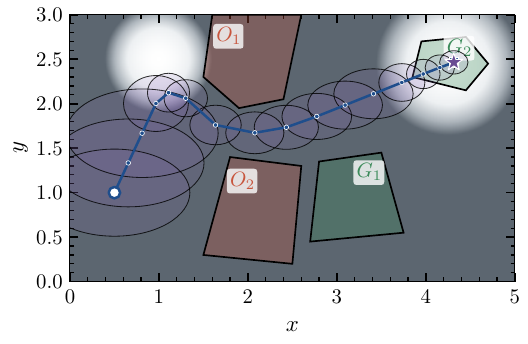}
        \caption{\envRA.}
        \label{fig:reach_avoid}
    \end{subfigure}
    \hfill
    \begin{subfigure}[t]{0.49\linewidth}
        \centering
        \includegraphics[width=\linewidth]{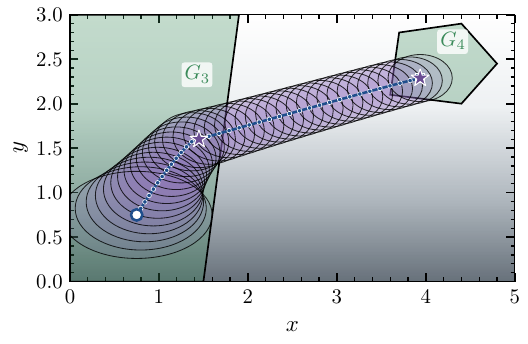}
        \caption{\envCM.}
        \label{fig:cautious_motion}
    \end{subfigure}
    \caption{Optimal belief trajectories. The background shows $m$, where brighter means more informative measurements. The blue line is the mean trajectory with $95\%$ confidence ellipses. Stars mark the time step at which a goal is reached and, in (b), at which $p_{\mathrm{unc}}$ first holds.}
    \label{fig:trajectories}
\end{figure}

\subsection{Results}
\label{sec:results}
Fig.~\ref{fig:trajectories} shows the optimal belief trajectories of the MICP.

In \envRA (Fig.~\ref{fig:reach_avoid}), the initial belief is too uncertain to pass between $O_1$ and $O_2$. Therefore, the robot first detours to the light at $\bm q_1$, and then passes between the obstacles. Although $G_1$ is closer, the robot continues to $G_2$, where the light reduces the uncertainty further.

In \envCM (Fig.~\ref{fig:cautious_motion}), the robot moves slowly for $t\le12$ and stays inside $G_3$. Instead of moving toward $G_4$, it moves toward the upper part of $G_3$, where measurements are more informative. At $t=13$, the uncertainty decreases and $p_{\mathrm{unc}}$ holds. The robot then increases its speed and reaches $G_4$. The robustness is small because the margins of $p_{\mathrm{slow}}$, of $p_{\mathrm{unc}}$, and of $G_4$ are equal at the optimum. In both examples, information gathering emerges from the specification without an explicit exploration objective.

\begin{table}[t]
    \caption{Comparison of the MICP and the NLP baseline on the exact belief trajectories. First feasible is the time to the first feasible solution of the MICP, which satisfies the specification.}
    \label{tab:results}
    \centering
    \small
    \setlength{\tabcolsep}{3pt}
    \begin{tabular}{lcccc}
        \toprule
        & \multicolumn{2}{c}{\envRA} & \multicolumn{2}{c}{\envCM} \\
        \cmidrule(lr){2-3}\cmidrule(lr){4-5}
        & MICP & NLP & MICP & NLP \\
        \midrule
        First feasible [\si{s}] & $2.1$ & -- & $1.9$ & -- \\
        Runtime [\si{s}] & $\mathbf{83.0}$ & 170.1 & $\mathbf{5.6}$ & 204.1 \\
        Robustness $\rho^{\varphi}$ & $\mathbf{0.146}$ & 0.129 & $\mathbf{0.0047}$ & $-0.057$ \\
        Cost $\alpha\mathcal J_{\mathrm{u}}-\rho^{\varphi}$ & $\mathbf{-0.065}$ & $-0.041$ & $\mathbf{0.059}$ & $0.120$ \\
        \bottomrule
    \end{tabular}
\end{table}

Table~\ref{tab:results} compares the MICP with the NLP baseline. In \envRA, the NLP converges to a local optimum with lower robustness and higher cost than the MICP. In \envCM, the NLP solver does not find a feasible solution with the current initialization. 

\section{Conclusion}
We showed that RiSTL planning in Gaussian belief space can be formulated as an MICP. Parameterizing the belief in information form makes the risk predicates convex, and relaxing the belief dynamics to a convex constraint preserves the feasible and globally optimal input trajectories. The planner is therefore sound, complete, and globally optimal. We validated it in numerical simulations, where the MICP certifies globally optimal plans and information gathering emerges from the specification.

\bibliographystyle{IEEEtran}
\bibliography{references}
	
\end{document}